%% file: main.tex
\documentclass[sigconf,natbib=false]{acmart}

\AtBeginDocument{%
  }

\usepackage{algorithm}
\usepackage{algorithmic}
\usepackage{xcolor}
\usepackage{colortbl}
\usepackage{placeins}
\usepackage{multirow}
\usepackage{amsmath}
\usepackage{subcaption}

\definecolor{bestbg}{RGB}{183,239,202}
\newcommand{\best}[1]{\cellcolor{bestbg}\textbf{#1}}
\newcommand{\xhdr}[1]{\vspace{0mm}\noindent{\bfseries #1.}}
\usepackage{xspace}
 
\newcommand{\method}{\textsc{TRicci}\xspace}

\newcommand{\sota}{state-of-the-art\xspace}

\copyrightyear{2026}
\acmYear{2026}
\setcopyright{cc}
\setcctype{by}
\acmConference[CIKM '26]
  {Proceedings of the 35th ACM International Conference on Information and Knowledge Management}
  {November 07--11, 2026}
  {Rome, Italy}
\acmBooktitle{Proceedings of the 35th ACM International Conference on Information and Knowledge Management (CIKM '26), November 07--11, 2026, Rome, Italy}
\acmDOI{10.1145/3799682.3840790}
\acmISBN{979-8-4007-2539-5/2026/11}

\RequirePackage[
  datamodel=acmdatamodel,
  style=acmnumeric,
]{biblatex}

\begin{document}

\title{Edge Sparsification via Temporal Forman-Ricci Curvature for Dynamic Graph Learning}

\author{Poupak Azad}
\orcid{0000-0003-0607-5073}
\affiliation{%
  \institution{University of Manitoba}
  \city{Winnipeg}
  \country{Canada}
}
\email{azadp@myumanitoba.ca}

\author{Cuneyt Gurcan Akcora}
\orcid{0000-0002-2882-6950}
\affiliation{%
  \institution{University of Central Florida}
  \city{Orlando}
  \country{United States}
}
\email{cuneyt.akcora@ucf.edu}

\author{Kiarash Shamsi}
\orcid{0000-0002-2000-3802}
\affiliation{%
  \institution{University of Manitoba}
  \city{Winnipeg}
  \country{Canada}
}
\email{shamsik1@myumanitoba.ca}

\renewcommand{\shortauthors}{Poupak Azad et al.}

\begin{abstract}
Temporal graph learning has become essential for analyzing real-world systems whose interactions continuously evolve over time, including financial transaction networks, communication systems, and online social platforms. However, learning from large-scale temporal graphs remains computationally challenging when networks are dense and rapidly changing. To address this limitation, we propose a network-curvature-inspired edge sparsification framework for dynamic graph learning. Our proposed method, \method, extends classical Forman--Ricci curvature to directed weighted temporal graphs by capturing structural support, temporal recency, and local interaction competition.
  
Experiments on 9 transaction networks and 3 temporal graph benchmark datasets demonstrate that the proposed framework preserves predictive performance across multiple graph-level prediction tasks. The results show that \method sparsifies temporal graphs by approximately 80\% while reducing end-to-end downstream training and inference time by an average of 55.94\%, without substantial degradation in predictive performance. Our findings suggest that temporal curvature can serve as a principled basis for scalable temporal graph learning by preserving predictive temporal-structural information under substantial sparsification.
\end{abstract}

\begin{CCSXML}
<ccs2012>
 <concept>
  <concept_id>10002951.10003227.10003236</concept_id>
  <concept_desc>Information systems~Data mining</concept_desc>
  <concept_significance>500</concept_significance>
 </concept>
 <concept>
  <concept_id>10010147.10010257.10010321</concept_id>
  <concept_desc>Computing methodologies~Machine learning algorithms</concept_desc>
  <concept_significance>500</concept_significance>
 </concept>
 <concept>
  <concept_id>10002950.10003648.10003688</concept_id>
  <concept_desc>Mathematics of computing~Graph algorithms</concept_desc>
  <concept_significance>300</concept_significance>
 </concept>
 <concept>
  <concept_id>10003033.10003079.10003080</concept_id>
  <concept_desc>Networks~Network structure</concept_desc>
  <concept_significance>300</concept_significance>
 </concept>
</ccs2012>
\end{CCSXML}

\ccsdesc[500]{Information systems~Data mining}
\ccsdesc[500]{Computing methodologies~Machine learning algorithms}
\ccsdesc[300]{Mathematics of computing~Graph algorithms}
\ccsdesc[300]{Networks~Network structure}

\keywords{Dynamic Graph Learning, Temporal Graphs, Graph Sparsification, Forman--Ricci Curvature}

\maketitle

\section{Introduction}
\label{sec:introduction}

Many real-world systems are represented as temporal graphs, where entities interact through time-stamped edges whose structure evolves continuously~\cite{kazemi2020representation}. Examples include communication systems~\cite{kumar2019predicting}, financial transaction networks~\cite{pareja2020evolvegcn}, online social platforms~\cite{sankar2020dysat}, and recommender systems~\cite{fan2021continuous}. Learning from such graphs is important because temporal interaction patterns often reveal changes in activity, influence, and emerging structural roles. In blockchain transaction networks, for instance, the temporal organization of transfers can expose shifts in user activity, transaction flow, and the emergence or disappearance of influential addresses~\cite{zhao2021temporal,AzadCKAChainlet25}.

Despite their importance, temporal graphs pose computational challenges. As interactions accumulate over time, temporal networks can become large, dense, and rapidly changing~\cite{huang2023temporal}. Downstream graph learning pipelines must repeatedly process evolving edge sets, construct temporal snapshots, extract structural features, and train predictive models over sequences of graph states. This becomes especially expensive when the graph contains many edges that contribute little to the target prediction task. In such settings, using the full graph may increase memory consumption and runtime while also introducing noisy or weakly informative interactions into the learning pipeline.

One way to address this challenge is graph sparsification, where the goal is to retain a compact subset of informative edges while preserving the predictive and structural properties of the original graph~\cite{spielman2008graph}. Existing sparsification and graph condensation methods have shown that smaller graph representations can support efficient learning~\cite{seo2024teddy, zhang2023ricci,zhang2024graph}. However, many existing approaches are designed primarily for static graphs, rely on degree-based or topology-only criteria, or optimize synthetic graph representations without explicitly modeling the temporal role of edges~\cite{gao2025rethinking, wang2024fast}. This is limiting for dynamic transaction networks, where the importance of an interaction depends on its weight, timing, and local temporal interaction context.

Ricci curvature has recently been used as an effective geometric tool~\cite{chen2025graph} for graph learning, as it provides an edge-level measure of local structural organization. Curvature-based graph methods can identify bottleneck edges, redundant connections, community-like regions, and structurally important interactions, making them useful for graph representation learning, robustness analysis, and sparsification. However, existing graph Ricci curvature formulations are mainly defined for static graphs and do not explicitly account for edge timestamps, temporal recency, or time-dependent competition among nearby interactions~\cite{chen2025graph,zhang2023ricci}.

To address this limitation, we propose \method, a temporal Forman--Ricci curvature-driven edge sparsification framework, for efficient dynamic graph learning. Building on the classical Forman--Ricci curvature formulation for discrete geometric structures~\cite{forman2003bochner} and its graph-based extensions~\cite{sreejith2016forman}, our method, \method, assigns each temporal edge an importance score by jointly considering structural support, temporal context, and local interaction competition. \method then retains 
task-relevant curvature-ranked edges selected through validation 
to construct a compact, sparse graph for downstream prediction. We evaluate the framework on multiple temporal graph prediction tasks across blockchain transaction networks and temporal benchmark datasets. The results show that the proposed method preserves strong predictive performance while retaining only approximately 20\% of the original edges and reducing end-to-end runtime by an average of 55.94\%, demonstrating that temporal curvature can serve as an effective edge-selection mechanism for scalable dynamic graph learning.

The main contributions of this paper are summarized as follows:
\begin{itemize}
    \item To the best of our knowledge, we introduce the first temporal Forman--Ricci curvature formulation for directed weighted temporal graphs.
    
    \item We apply curvature-based edge sparsification to discrete-time dynamic graphs by retaining task-relevant curvature-ranked edges to construct compact temporal graph representations. 

\item We evaluate \method on three graph-level temporal prediction tasks across 12 datasets, including blockchain transaction networks and TGBL benchmark graphs, and compare it against three \sota sparsification methods. The results show that high-curvature sparse graphs retain $97.7\pm2.2\%$ of the full-graph ROC--AUC while using only approximately $20\%$ of the original edges, compared to $89.4\pm6.3\%$ for the strongest competing method.
 
\end{itemize}

To support reproducibility, the implementation and experimental artifacts are publicly available at:
\textcolor{blue}{\url{https://github.com/FDataLab/TemporalRicci}}.

\section{Related Work}
\label{sec:related_work}

Temporal graph learning methods are commonly categorized into discrete-time dynamic graphs (DTDGs) and continuous-time dynamic graphs (CTDGs)~\cite{huang2023temporal}. DTDG methods represent temporal evolution using a sequence of graph snapshots constructed over fixed time intervals, while CTDG methods model interactions as continuous event streams with exact timestamps. Snapshot-based DTDG approaches are widely used in temporal prediction and graph evolution analysis because they provide stable structural representations and scalable processing for large dynamic networks~\cite{kazemi2020representation,rossi2020temporal}. In contrast, CTDG methods typically focus on event-driven temporal message passing and continuous interaction modeling~\cite{rossi2020temporal,xu2020inductive}. This distinction is important for sparsification because edge-importance scores may be defined either over snapshot-level graph structures or over individual timestamped events.

Graph sparsification methods~\cite{hashemi2024comprehensive} have been widely studied to reduce graph size, improve computational efficiency, and filter noisy or redundant connectivity. Current graph sparsification methods aim to construct a smaller subgraph that approximates the original graph both structurally and spectrally. In graph neural networks, early sparsification strategies often focused on randomly or structurally removing edges during training. \textsc{DropEdge} randomly removes edges from the input graph at each training epoch, acting both as data augmentation and as a message-passing reducer to alleviate overfitting and oversmoothing~\cite{rong2020dropedge}. \textsc{DropEdge++} extends this idea with structure-aware samplers, including layer-dependent and feature-dependent edge sampling, to better control oversmoothing in deep GCNs~\cite{han2023structure}. While these methods are simple and effective, they mainly operate as training-time regularization mechanisms and do not provide a deterministic edge-importance ranking for constructing a reusable sparse graph.

Several studies learn or refine graph structures directly from data. For instance, Franceschi et al. formulate graph structure learning as a bilevel optimization problem, jointly learning a discrete edge distribution and the parameters of a graph convolutional network~\cite{franceschi2019learning}. This allows the model to infer a sparse dependency structure when the observed topology is noisy, incomplete, or unavailable. \textsc{Pro-GNN} further improves robustness by jointly learning a clean graph and a graph neural network model under sparsity, low-rank, and feature-smoothness constraints~\cite{jin2020graph}. \textsc{PTDNet} introduces a parameterized topological denoising network that removes task-irrelevant edges during training while imposing a low-rank constraint on the sparsified graph~\cite{luo2021learning}. These methods demonstrate that edge selection can improve robustness and predictive performance when the input graph contains noisy or unreliable connections. However, because the sparse structure is learned alongside a particular downstream model, the sparsification process is often task-dependent and computationally more expensive than a preprocessing-based edge-ranking strategy~\cite{zhu2021survey}.

Another group of methods addresses scalability by sampling sparse computation graphs rather than constructing a persistent sparse graph. \textsc{FastGCN} uses importance sampling to reduce the cost of graph convolution~\cite{chen2018fastgcn}. \textsc{LADIES} improves this idea by constructing layer-dependent sampled subgraphs for deep graph convolutional network training~\cite{zou2019layer}. \textsc{GraphSAINT} performs subgraph sampling for scalable inductive learning and normalizes the estimator to support efficient mini-batch training~\cite{zeng2019graphsaint}. These approaches are effective for reducing training cost, especially on large static graphs. Nevertheless, their main goal is efficient mini-batch optimization, and the sampled graphs are not necessarily designed to provide a stable sparsified representation that can be reused across multiple downstream temporal prediction tasks.

Recent work also studies sparsification through graph lottery tickets and adaptive pruning. Chen et al. jointly prune both graph edges and model weights, showing that sparse subgraphs and sparse subnetworks can preserve the performance of dense graph neural networks~\cite{chen2021unified}. \textsc{TEDDY} reduces the cost of graph lottery ticket discovery by using a one-shot degree-based edge sparsification strategy~\cite{seo2024teddy}. Zhang et al. extend this direction by combining multiple sparsifier experts with different pruning criteria and sparsity levels, enabling node-adaptive graph sparsification~\cite{zhang2025graph}. These methods provide strong evidence that a large portion of graph edges can be removed without substantial performance loss. However, they are still mainly developed for static graph settings, where edge importance is estimated from a fixed topology rather than from a sequence of timestamped interactions.

A smaller body of work addresses sparsification in temporal or spatio-temporal settings. Zheng et al. jointly learn a stochastic edge-sampling network and a temporal-structural convolutional classifier, showing that task-driven sparsification can reduce overfitting while preserving discriminative temporal interaction patterns~\cite{zheng2020node}. Duan et al. introduce an adaptive sparsification strategy for spatial-temporal graph neural networks by pruning spatial adjacency matrices in traffic forecasting models~\cite{duan2023localised}. \textsc{STGS} formulates spatio-temporal graph sparsification as a reinforcement learning problem that removes edges while preserving structural and dynamic properties relevant to downstream forecasting~\cite{shabani2025stgs}. \textsc{dyGRASS} studies dynamic spectral graph sparsification from a graph maintenance perspective by maintaining spectral sparsifiers under streaming edge insertions and deletions using localized random walks on GPUs~\cite{yuan2025dygrass}. Although these studies extend sparsification beyond static graphs, they are primarily developed for specific settings such as temporal node classification, spatiotemporal forecasting, spatiotemporal prediction, or dynamic spectral maintenance. They do not directly provide a model-independent edge-ranking mechanism for directed weighted temporal graphs that jointly accounts for structural support and temporal interaction context before downstream learning.

\begin{figure*}
    \centering
    \includegraphics[width=0.98\textwidth]{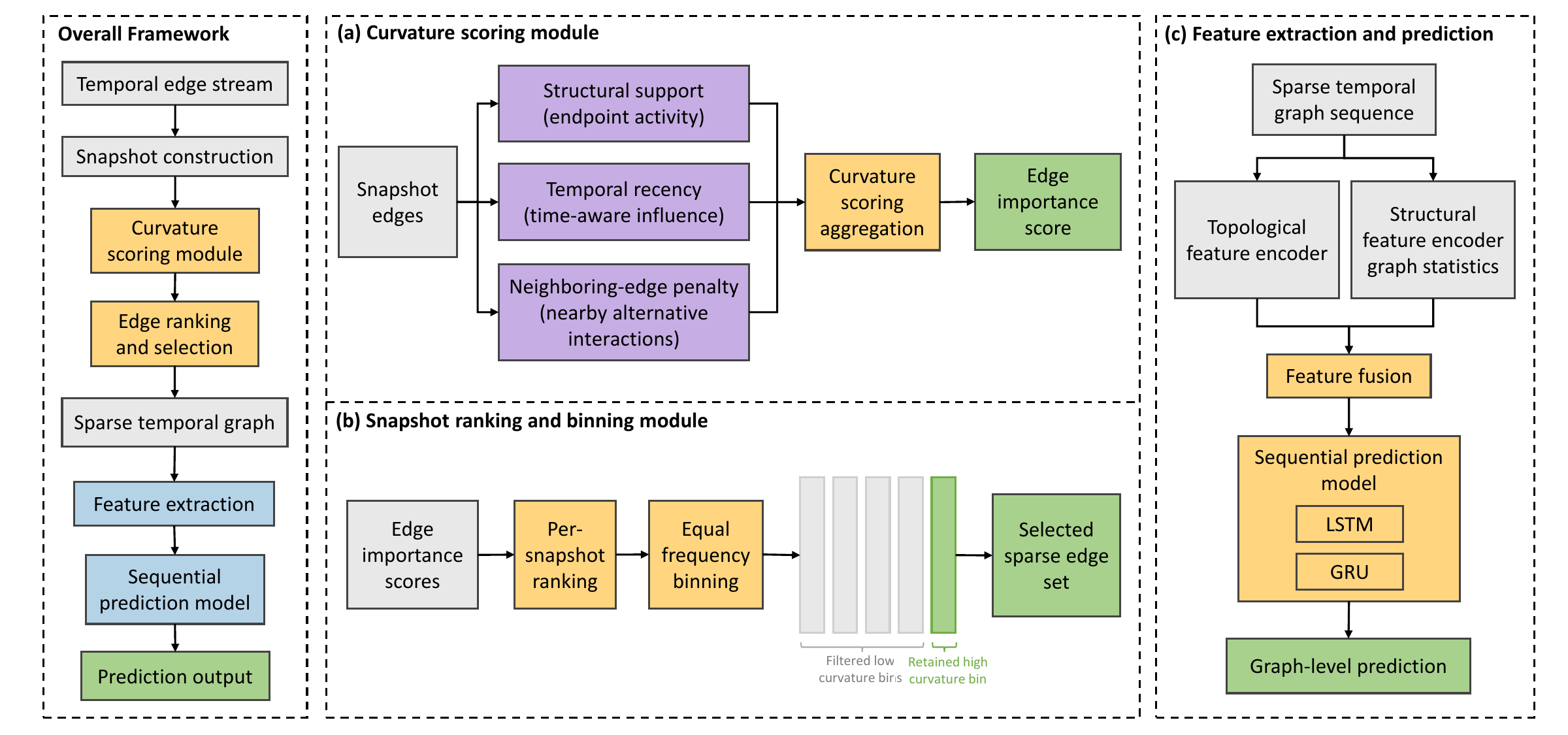}
    \caption{Architecture of the proposed Temporal Forman--Ricci curvature-based sparsification framework in three components: (a) the curvature scoring module, (b) the snapshot ranking and binning module, and (c) the feature extraction and prediction module. The left column summarizes the overall workflow. Gray blocks indicate input graph representations and intermediate data structures, purple blocks denote temporal curvature components, yellow blocks represent scoring, ranking, aggregation, and prediction-processing stages, blue blocks indicate feature extraction and learning modules, and green blocks denote the retained sparse edge set and final prediction output. The retained high-curvature bin is shown as the task-specific setting used in this study; for other downstream tasks, a different curvature range may be selected if it better preserves task-relevant information.}
    \label{fig:tricci_architecture}
\end{figure*}

Nevertheless, sparsifying temporal interaction networks remains a distinct challenge. Existing temporal and spatio-temporal sparsification methods address important settings such as temporal node classification, spatial-temporal forecasting, skeleton-based action recognition, and dynamic spectral maintenance. However, they are not primarily designed to provide an edge-level ranking for directed weighted temporal graphs across different downstream prediction tasks. This leaves a gap for a model-independent temporal sparsification strategy that can score edges before downstream learning while accounting for both structural support and temporal interaction contexts.

\section{Methodology}
\label{sec:methodology}
This section presents the proposed Temporal Forman--Ricci curvature framework for edge sparsification in directed weighted temporal graphs. Our goal is to assign each temporal edge an importance score that reflects its structural support, temporal relevance, and local interaction competition, and then use this score to construct a compact, sparse graph for downstream dynamic graph learning.

\subsection{Problem Setup}
Let $\mathcal{G}=(\mathcal{V},\mathcal{E})$ denote a directed weighted temporal graph, where each edge $e \in \mathcal{E}$ is represented as a tuple

\begin{equation}
e=(u,v,w_e,t_e),
\end{equation}

with source node $u \in \mathcal{V}$, destination node $v \in \mathcal{V}$, positive edge weight $w_e>0$, and timestamp $t_e$. The edge weight may represent the strength of interaction, such as transaction value, review intensity, or interaction frequency.

Given a temporal graph $\mathcal{G}$, our objective is to construct a sparse temporal graph $\widetilde{\mathcal{G}}=(\mathcal{V},\widetilde{\mathcal{E}})$, where $\widetilde{\mathcal{E}}\subset \mathcal{E}$ and $|\widetilde{\mathcal{E}}|\ll |\mathcal{E}|$, such that $\widetilde{\mathcal{G}}$ preserves the most informative temporal interactions for downstream prediction. We sparsify at the edge level: edges are ranked by their Temporal Forman--Ricci curvature scores, and the retained curvature range is selected through validation for the downstream task.

Figure~\ref{fig:tricci_architecture} illustrates the proposed framework. We organize the framework into three components: a curvature-scoring module, a ranking-and-selection module, and a downstream feature-extraction-and-prediction module. We first partition the temporal edge stream into discrete-time snapshots, compute curvature-based edge importance scores within each snapshot, select task-relevant curvature-ranked edges to construct sparse temporal graphs, and then apply the same graph-level feature extraction and sequential prediction pipeline used in the downstream evaluation.

\subsection{Temporal Forman--Ricci Curvature}
Classical Forman--Ricci curvature provides an edge-centered geometric measure of local graph structure. For a static undirected weighted graph, the graph-based Forman--Ricci curvature of an edge \(e=(u,v)\) can be written~\cite{forman2003bochner,sreejith2016forman} as:

\begin{equation}
F(e)=w_e\left(
\frac{w_u}{w_e}
+
\frac{w_v}{w_e}
-
\sum_{e_u \sim e}
\frac{w_u}{\sqrt{w_e w_{e_u}}}
-
\sum_{e_v \sim e}
\frac{w_v}{\sqrt{w_e w_{e_v}}}
\right),
\end{equation}

where \(w_e\) is the edge weight, \(w_u\) and \(w_v\) are the weights of the endpoint nodes, and \(e_u\) and \(e_v\) denote neighboring edges incident to \(u\) and \(v\), respectively. This formulation follows a support-minus-penalty structure: the endpoint terms increase the curvature of \(e\), while neighboring edges around the endpoints reduce it.

The classical graph formulation is static and does not directly model edge direction, timestamps, or time-dependent local interaction patterns. We therefore introduce a temporal directed adaptation. Our Temporal Forman--Ricci curvature score keeps the support-minus-penalty principle, but modifies the two components as follows. \textbf{First}, we replace static node weights with log-smoothed endpoint activity computed inside each temporal snapshot. \textbf{Second}, we define directed outgoing neighborhoods around the endpoints of each edge. \textbf{Third}, and most importantly, we weight neighboring-edge penalties by temporal proximity, since temporally adjacent interactions often provide overlapping local structural information and should therefore contribute more strongly to the competition penalty. This follows the Forman-Ricci support-minus-penalty principle, where neighboring interactions around an edge reduce its curvature when they represent local competition.

We first define the endpoint-support component of the proposed temporal curvature score. For a node \(x \in \mathcal{V}\), let \(\mathcal{E}(x)\) denote the set of edges incident to \(x\) within the current temporal snapshot. We define the weighted activity of \(x\) as

\begin{equation}
\operatorname{strength}(x)=\sum_{e' \in \mathcal{E}(x)} w_{e'}.
\end{equation}

Since temporal interaction networks often contain heavy-tailed edge weights and highly active nodes, we use a log-smoothed node activity term,

\begin{equation}
s(x)=\log\left(1+\operatorname{strength}(x)\right).
\end
{equation}

This transformation reduces the dominance of highly active nodes while preserving relative activity differences. Using this smoothed activity, we define the endpoint support of edge \(e=(u,v,w_e,t_e)\) as

\begin{equation}
S_e = w_e\left(\frac{1}{s(u)}+\frac{1}{s(v)}\right).
\end{equation}

The support term assigns larger values to edges whose endpoints are not dominated by many high-weight incident interactions. In this sense, \(S_e\) measures how distinctive the edge is relative to the local activity around its endpoints.

We next define the neighboring-edge penalty. To make this penalty time-aware, we use a temporal decay kernel between edge \(e\) and a neighboring edge \(e'\):

\begin{equation}
K(e,e',\tau)=\exp\left(-\frac{|t_e-t_{e'}|}{\tau}\right),
\end{equation}

where $\tau>0$ is a temporal scale parameter. When two edges occur close in time, $K(e,e',\tau)$ is close to one; when they are far apart, the influence of $e'$ on $e$ decays toward zero. Thus, the kernel allows nearby interactions to contribute more strongly to the neighboring-edge penalty than temporally distant interactions.

Because the graph is directed, we construct the neighboring-edge penalty from outgoing temporal neighborhoods around the two endpoints of the target edge. For edge $e=(u,v,w_e,t_e)$, we define the outgoing neighborhood of the source endpoint \(u\) as
\begin{equation}
\mathcal{N}_u(e)=\{e'=(u,x,w_{e'},t_{e'}) \in \mathcal{E} \mid x \neq v\},
\end{equation}

and the outgoing neighborhood of the destination endpoint \(v\) as

\begin{equation}
\mathcal{N}_v(e)=\{e'=(v,x,w_{e'},t_{e'}) \in \mathcal{E} \mid x \neq u\}.
\end{equation}

Edges in \(\mathcal{N}_u(e)\) represent source-side alternative interactions, while edges in \(\mathcal{N}_v(e)\) capture destination-side continuation interactions. Together, these neighborhoods describe the local temporal context of \(e\). Temporally close neighboring interactions increase the penalty, indicating that the target edge is less isolated in its local temporal context.

We compute the temporal neighboring-edge penalty from endpoint \(u\) as

\begin{equation}
D_u(e)=
\begin{cases}
\displaystyle
\frac{w_e}{|\mathcal{N}_u(e)|}
\sum_{e'\in \mathcal{N}_u(e)}
\frac{K(e,e',\tau)}{\sqrt{w_e w_{e'}}},
& |\mathcal{N}_u(e)|>0,\\[10pt]
0, & |\mathcal{N}_u(e)|=0.
\end{cases}
\end{equation}

We compute the corresponding penalty from endpoint \(v\) as

\begin{equation}
D_v(e)=
\begin{cases}
\displaystyle
\frac{w_e}{|\mathcal{N}_v(e)|}
\sum_{e'\in \mathcal{N}_v(e)}
\frac{K(e,e',\tau)}{\sqrt{w_e w_{e'}}},
& |\mathcal{N}_v(e)|>0,\\[10pt]
0, & |\mathcal{N}_v(e)|=0.
\end{cases}
\end{equation}

The factor $\sqrt{w_e/w_{e'}}$ preserves the relative weight normalization used in weighted Forman--Ricci curvature; thus, $w_{e'}$ acts as a normalization term, while the temporal kernel controls the proximity-based contribution of the neighboring edge.

The total temporal competition penalty is then

\begin{equation}
D_e=\frac{D_u(e)+D_v(e)}{2}.
\end{equation}

Finally, we define the Temporal Forman--Ricci curvature score of edge \(e\) as
$\operatorname{\method}(e)=S_e-D_e$. Equivalently,

\begin{equation}
\operatorname{\method}(e)
=
w_e\left(\frac{1}{s(u)}+\frac{1}{s(v)}\right)
-
\frac{1}{2}
\left[
D_u(e)+D_v(e)
\right].
\end{equation}

\method is a real-valued curvature score, with
$\method(e) \in \mathbb{R}$. A high \method value indicates that an edge has strong endpoint support and a limited temporally proximate neighboring-edge penalty. A low \method value indicates that the edge has weak endpoint support or appears in a crowded temporal neighborhood where several nearby interactions provide similar local information.

\method is intended as a temporal edge-importance functional inspired by the support-minus-competition structure of Forman--Ricci curvature. We do not claim that the proposed score preserves all geometric interpretations associated with classical graph curvature. Instead, the formulation adapts the Forman-style support-minus-penalty principle to rank directed weighted temporal interactions for downstream sparsification and prediction.

Figure~\ref{fig:toy_low_curvature} illustrates how the proposed curvature score identifies a low-curvature edge within a single temporal snapshot (We report curvature scores of all edges in our GitHub repo). The target edge $e_9$ connects nodes $u$ and $v$, but it is surrounded by several local alternatives around the same endpoints. In particular, outgoing edges from $u$ and $v$, such as $e_7$, $e_8$, $e_{12}$, and $e_{13}$, act as direct competing interactions in the local neighborhood. When these competing edges are temporally close to the target edge, they increase the temporal competition penalty. In addition, incident activity around $u$ and $v$ contributes to endpoint strength, which can reduce the support term of the target edge. As a result, $e_9$ is selected for removal during sparsification.

\begin{figure}[t]
    \centering
    \includegraphics[width=0.96\linewidth]{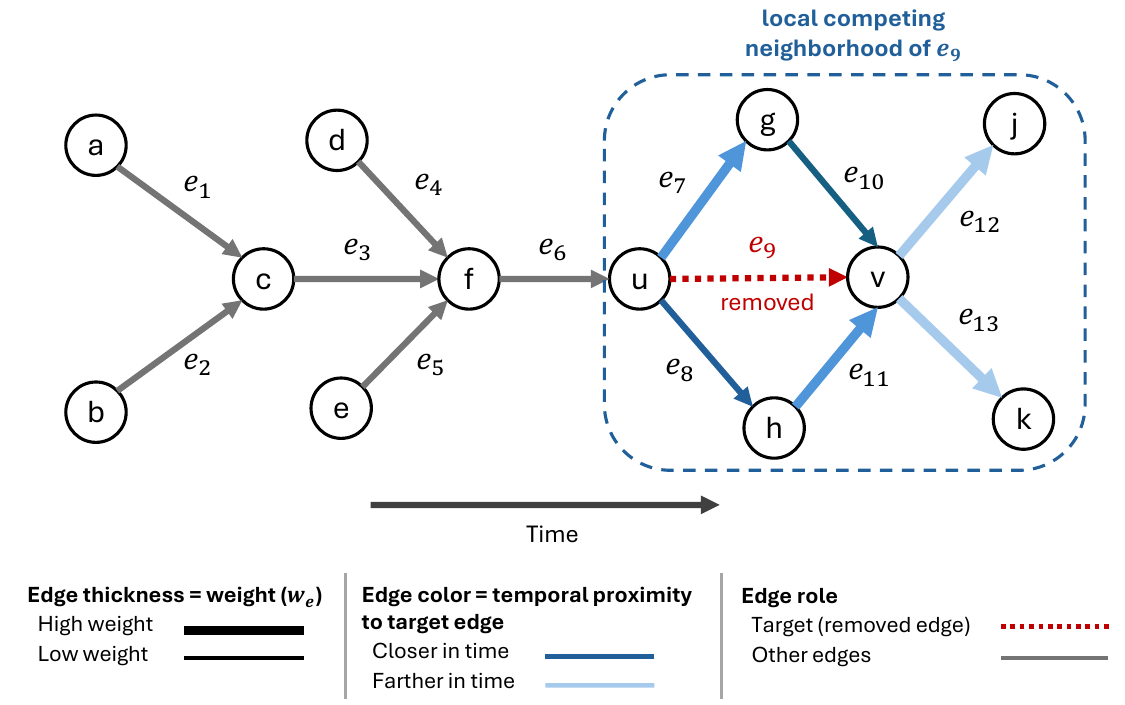}
    \caption{{Single temporal snapshot.} Illustration of low-curvature edge removal within a single temporal snapshot. Edge thickness indicates interaction weight, while edge color indicates temporal proximity to the target edge. The red dashed edge is the target edge selected for removal because it has weak curvature relative to its local competing neighborhood.} 
    \label{fig:toy_low_curvature}
    \vspace{-12pt}
\end{figure}

\subsection{Snapshot-Level Curvature Computation}

Temporal graphs evolve continuously, and computing curvature over the entire history may mix interactions from different temporal contexts. We avoid this issue by representing the temporal edge stream as a sequence of non-overlapping snapshots. Let $\Delta$ denote the snapshot length. We define the edge set of snapshot $r$ as

\begin{equation}
\mathcal{E}_r=\{e \in \mathcal{E} \mid t_e \in [T_r,T_r+\Delta)\}.
\end{equation}

Each snapshot defines a local temporal graph $\mathcal{G}_r=(\mathcal{V}_r,\mathcal{E}_r)$, where $\mathcal{V}_r$ contains the nodes incident to edges in $\mathcal{E}_r$.

We compute Temporal Forman--Ricci curvature independently within each snapshot. This prevents information leakage across future snapshots and ensures that each edge is scored relative to its local temporal context. 

\subsection{Task-Guided Curvature-Based Edge Selection}

We use \method as a model-independent scoring mechanism for temporal edge selection, while the retention policy remains task-guided. Algorithm~\ref{alg:tricci} summarizes the proposed task-guided sparsification procedure.  For each temporal snapshot, \method assigns a curvature score to every edge by combining endpoint support, temporal proximity, and local outgoing interaction competition. The resulting scores define an ordering over temporal interactions within the snapshot.

The sparsification policy is separated from the curvature definition. After computing the snapshot-level scores, the downstream task determines which curvature range should be retained. A policy may retain high-curvature edges, low-curvature edges, or an intermediate curvature band, depending on the validation criterion for task $\mathcal{T}$. At a fixed $\tau$, curvature scores are computed once and reused across the candidate ranges, so the additional selection cost is limited to downstream validation. Based on the aggregate validation results, we use high-curvature retention as the common default, although the optimal range may vary across tasks.

Let $\rho \in [0,1)$ denote the edge-deletion ratio. For snapshot $\mathcal{E}_r$, let $\mathbf{q}_r=\{q_e:e\in\mathcal{E}_r\}$ denote the \method score vector, and let $k_r=\lceil(1-\rho)|\mathcal{E}_r|\rceil$ denote the number of retained edges. We define a task-guided curvature-selection policy $\Pi_{\rho}$ that retains $k_r$ edges according to their curvature ranks for task $\mathcal{T}$. The sparse snapshot is then given by $\widetilde{\mathcal{E}}_r=\Pi_{\rho}(\mathcal{E}_r,\mathbf{q}_r,\mathcal{T})$.

Several policies can be obtained from this definition. A high-curvature policy retains the top-ranked edges, $\Pi_{\rho}^{\mathrm{high}}(\mathcal{E}_r,\mathbf{q}_r,\mathcal{T})=\operatorname{Top}_{k_r}(\mathcal{E}_r;\mathbf{q}_r)$, where $\operatorname{Top}_{k_r}$ returns the $k_r$ edges with the largest \method scores. A low-curvature policy retains the bottom-ranked edges, $\Pi_{\rho}^{\mathrm{low}}(\mathcal{E}_r,\mathbf{q}_r,\mathcal{T})=\operatorname{Bottom}_{k_r}(\mathcal{E}_r;\mathbf{q}_r)$, where $\operatorname{Bottom}_{k_r}$ returns the $k_r$ edges with the smallest \method scores. Instead of high- or low-selection, an alternative approach could use a band-selection policy that first partitions edges into curvature-ranked bins and then retains the bin or set of bins selected for task $\mathcal{T}$. 

Since edge selection is performed independently within each snapshot, the retained sparse graph preserves temporal coverage across the full observation period. This avoids selecting edges only from dense or high-activity snapshots and ensures that each time interval contributes to the final sparse temporal representation.

\begin{algorithm}
\caption{\method: Temporal Forman--Ricci Sparsification}
\label{alg:tricci}
\begin{algorithmic}[1]
\REQUIRE Directed weighted temporal graph $\mathcal{G}=(\mathcal{V},\mathcal{E})$, snapshot length $\Delta$, temporal scale $\tau$, edge-deletion ratio $\rho$, downstream task $\mathcal{T}$, curvature-selection policy $\Pi_{\rho}$
\ENSURE Sparse temporal graph $\widetilde{\mathcal{G}}=(\mathcal{V},\widetilde{\mathcal{E}})$
\STATE Split $\mathcal{E}$ into non-overlapping ${\mathcal{E}_r}$ using $\Delta$
\STATE Initialize $\widetilde{\mathcal{E}} \leftarrow \emptyset$
\FOR{each temporal snapshot $\mathcal{E}_r$}
\STATE Construct local graph $\mathcal{G}_r=(\mathcal{V}_r,\mathcal{E}_r)$
\STATE Compute log-smoothed node strengths $s(x)$ for all $x \in \mathcal{V}_r$
\FOR{each edge $e=(u,v,w_e,t_e) \in \mathcal{E}_r$}
\STATE Compute endpoint support $S_e$
\STATE Identify outgoing competing neighborhoods $\mathcal{N}_u(e)$ and $\mathcal{N}_v(e)$
\STATE Compute temporal penalties $D_u(e)$ and $D_v(e)$ using $K(e,e',\tau)$
\STATE Compute the \method score $q_e=S_e-\frac{D_u(e)+D_v(e)}{2}$
\ENDFOR
\STATE Let $\mathbf{q}_r={q_e:e\in\mathcal{E}_r}$
\STATE Select task-relevant curvature-ranked edges $\widetilde{\mathcal{E}}_r=\Pi_{\rho}(\mathcal{E}_r,\mathbf{q}_r,\mathcal{T})$
\STATE Retain $\widetilde{\mathcal{E}}_r$ as the sparse snapshot for task $\mathcal{T}$
\STATE $\widetilde{\mathcal{E}} \leftarrow \widetilde{\mathcal{E}} \cup \widetilde{\mathcal{E}}_r$
\ENDFOR
\RETURN $\widetilde{\mathcal{G}}=(\mathcal{V},\widetilde{\mathcal{E}})$
\end{algorithmic}
\end{algorithm}

\xhdr{Analytical properties} The proposed curvature score has two elementary monotonicity properties that formalize its intended interpretation. The first property shows that the support term decreases when either endpoint becomes more active. The second property shows that the temporal penalty is strongest for nearby interactions and weakens as neighboring edges move farther away in time. Together, these properties justify the use of TRicci as a support-minus-competition score for directed weighted temporal graphs.

\begin{proposition}[endpoint activity monotonicity] Holding $w_e$ and all penalty terms fixed, the support term decreases strictly when the endpoint activity at $u$ or $v$ increases.
\end{proposition}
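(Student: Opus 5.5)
We view the support term as a function of the endpoint activities and differentiate it. By the definitions above, $S_e = w_e\bigl(1/s(u) + 1/s(v)\bigr)$ with $s(x) = \log(1+\operatorname{strength}(x))$. The penalty $D_e$ does not involve $s(\cdot)$, so holding it fixed is harmless. We treat $S_e$ as a function of the two real variables $a = \operatorname{strength}(u)$ and $b = \operatorname{strength}(v)$, with $w_e$ held constant. By symmetry it suffices to handle an increase in $a$ with $b$ fixed.

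\textbf{Positivity.} First we check that every quantity is well defined and positive. Since $e$ is itself incident to $u$ in the current snapshot, we have $\operatorname{strength}(u) \ge w_e > 0$. Hence $1+\operatorname{strength}(u) > 1$, so $s(u) > 0$ and $1/s(u)$ is finite and positive. The same holds at $v$.

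\textbf{Monotonicity.} Next we establish the chain of monotonicities.
\begin{itemize}
\item The map $a \mapsto \log(1+a)$ is strictly increasing on $(0,\infty)$, since its derivative is $1/(1+a) > 0$.
\item The map $y \mapsto 1/y$ is strictly decreasing on $(0,\infty)$.
\end{itemize}
Their composition $a \mapsto 1/\log(1+a)$ is therefore strictly decreasing. Explicitly, its derivative is $-1/\bigl((1+a)\log^2(1+a)\bigr) < 0$. Multiplying by the fixed constant $w_e > 0$ preserves strict decrease, and adding the unchanged term $w_e/s(v)$ does not affect it. Hence, if $\operatorname{strength}(u)$ increases from $a$ to $a' > a$, then $S_e$ strictly decreases. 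The same argument applies at $v$. If both activities increase, composing the two one-variable steps still gives a strict decrease.

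\textbf{Main obstacle.} The only delicate point is interpreting the hypothesis ``holding $w_e$ fixed while activity increases.'' Activity can only grow through other incident edges $e' \neq e$, whose weights enter $\operatorname{strength}$ but not $w_e$. We therefore read the statement as a comparison of $S_e$ at two values of the strength, with $w_e$ unchanged. Under this reading the penalty terms $D_u(e)$ and $D_v(e)$, which depend on outgoing neighbors, are frozen by assumption, and the argument above is complete.

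The other point to watch is the positivity of $s(u)$, which the Positivity step already settles. This shows that the degenerate case $s = 0$, where $1/s$ blows up, cannot occur.
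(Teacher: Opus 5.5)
Your proof is correct and follows the same route as the paper. Both write $S_e = w_e\bigl(1/\log(1+a_u) + 1/\log(1+a_v)\bigr)$ and show that the partial derivative in each endpoint strength, $-w_e/\bigl((1+a)\log^2(1+a)\bigr)$, is strictly negative; your positivity step ($\operatorname{strength}(u)\ge w_e>0$, since $e$ itself is incident to $u$) simply justifies the paper's standing assumption $a_u,a_v>0$.
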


\begin{proof} Let $a_u=\operatorname{strength}(u)$ and $a_v=\operatorname{strength}(v)$, with $a_u,a_v>0$. Since $s(x)=\log(1+\operatorname{strength}(x))$, the support term can be written as $S_e=w_e(\log(1+a_u)^{-1}+\log(1+a_v)^{-1})$. Differentiating with respect to $a_u$ gives $\partial S_e/\partial a_u=-w_e((1+a_u)(\log(1+a_u))^2)^{-1}<0$. The same calculation gives $\partial S_e/\partial a_v=-w_e((1+a_v)(\log(1+a_v))^2)^{-1}<0$. Hence any increase in endpoint activity lowers the support assigned to $e$. This proves that edges become less distinctive when their endpoints are crowded by other weighted interactions.
\end{proof}

\begin{proposition}[temporal decay and limiting regimes] Fix the snapshot, edge weights, and directed neighborhood sets. Then each individual penalty contribution decreases strictly with temporal distance. Moreover, as $\tau\to 0^+$, only exactly synchronous neighbors contribute to the penalty. If no timestamp ties occur inside the snapshot, then $\operatorname{TRicci}(e)\to S_e$. As $\tau\to\infty$, the score converges to a time-agnostic directed support-minus-average-competition score.
\end{proposition}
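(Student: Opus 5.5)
The plan is to work directly from the explicit formula for $D_u(e)$ and $D_v(e)$, treating each summand as the basic object. With the snapshot, weights and neighborhood sets fixed, a single penalty contribution of a neighbor $e'\in\mathcal{N}_u(e)$ is
\[
c(e')=\frac{w_e}{|\mathcal{N}_u(e)|}\cdot\frac{1}{\sqrt{w_e w_{e'}}}\cdot\exp\!\left(-\frac{\delta}{\tau}\right),\qquad \delta=|t_e-t_{e'}|,
\]
and the same holds with $v$ in place of $u$. The prefactor is strictly positive, since $w_e,w_{e'}>0$ and the neighborhood is nonempty whenever the term appears. It is also independent of $\delta$. Hence strict monotonicity in temporal distance reduces to the derivative
\[
\frac{\partial}{\partial\delta}e^{-\delta/\tau}=-\frac{1}{\tau}e^{-\delta/\tau}<0 .
\]
This mirrors the differentiation argument used for endpoint activity monotonicity.

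For the limit $\tau\to 0^+$, I will take the pointwise limit of the kernel. If $\delta>0$, then $\delta/\tau\to\infty$ and $K\to 0$. If $\delta=0$, then $K=1$ for every $\tau$. So the limiting kernel is the indicator $\mathbf{1}[t_{e'}=t_e]$. The sums are finite because each snapshot has finitely many edges, so the limit passes through them. This gives
\[
D_u(e)\to\frac{w_e}{|\mathcal{N}_u(e)|}\sum_{e'\in\mathcal{N}_u(e):\,t_{e'}=t_e}\frac{1}{\sqrt{w_e w_{e'}}}.
\]
Only exactly synchronous neighbors survive. If no timestamp ties occur inside the snapshot, then $t_{e'}\neq t_e$ for every $e'\neq e$, and $e\notin\mathcal{N}_u(e)\cup\mathcal{N}_v(e)$. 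The latter holds because $\mathcal{N}_u(e)$ excludes destination $v$, and $\mathcal{N}_v(e)$ consists of edges out of $v$ that are distinct from $e$ unless $u=v$, a case I will note is excluded or harmless. Both limiting sums are therefore empty, so $D_e\to 0$ and $\operatorname{TRicci}(e)\to S_e$. I also note that $S_e$ does not depend on $\tau$.

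For $\tau\to\infty$, $K\to 1$ for every finite $\delta$, and again finiteness lets the limit pass through the sum. Then
\[
D_u(e)\to\frac{1}{|\mathcal{N}_u(e)|}\sum_{e'\in\mathcal{N}_u(e)}\sqrt{w_e/w_{e'}},
\]
which is the average of the Forman-style weight ratios over the directed outgoing neighborhood, and similarly for $D_v(e)$. The score therefore tends to $S_e$ minus the mean of these two averaged competition terms. This expression contains no timestamps, which is exactly the claimed time-agnostic directed support-minus-average-competition score.

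The only delicate step is the tie-free case of $\tau\to 0^+$. There I must make sure that the target edge itself, and any parallel edges with the same timestamp, are not counted as neighbors. I would handle this by reading "no timestamp ties" as meaning that distinct edges of the snapshot have distinct timestamps, and then checking the neighborhood definitions as above. The remaining steps are routine limits of finite sums.
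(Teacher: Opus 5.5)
Your proposal is correct and follows the paper's proof essentially step for step. You differentiate the kernel factor in the temporal distance against a positive, $\delta$-independent prefactor, then take the pointwise limits $\exp(-d/\tau)\to\mathbf{1}_{d=0}$ as $\tau\to0^+$ and $\exp(-d/\tau)\to1$ as $\tau\to\infty$ inside the finite neighborhood sums. Your extra checks are details the paper leaves implicit: that limits pass through finite sums, and that $e$ itself never lies in $\mathcal{N}_u(e)$ or $\mathcal{N}_v(e)$. The latter follows automatically from the conditions $x\neq v$ and $x\neq u$, even for self-loops.
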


\begin{proof} Consider a nonempty source-side neighborhood $N_u(e)$ and a neighboring edge $e'\in N_u(e)$. Let $d=|t_e-t_{e'}|$. The penalty contribution of $e'$ to $D_u(e)$ is $|N_u(e)|^{-1}\sqrt{w_e/w_{e'}}\exp(-d/\tau)$. Its derivative with respect to $d$ is $-|N_u(e)|^{-1}\tau^{-1}\sqrt{w_e/w_{e'}}\exp(-d/\tau)<0$, since $w_e>0$, $w_{e'}>0$, and $\tau>0$. The same argument applies to every destination-side contribution in $N_v(e)$. Thus, each individual neighboring-edge penalty decreases strictly as temporal distance increases. For the first limit, $\exp(-d/\tau)\to \mathbf{1}_{d=0}$ as $\tau\to 0^+$. Therefore only neighbors with $t_e=t_{e'}$ contribute. If no timestamp ties occur inside the snapshot, then every neighboring edge has $d>0$, so $D_u(e)\to 0$ and $D_v(e)\to 0$, which gives $\operatorname{TRicci}(e)\to S_e$. For the second limit, $\exp(-d/\tau)\to 1$ as $\tau\to\infty$ for every fixed $d$. Hence, temporal distances no longer affect the penalty, and $\operatorname{TRicci}(e)$ converges to the directed weighted score obtained by replacing every temporal kernel factor with one. As $\tau\to\infty$, $\mathrm{TRicci}(e)\to S_e-\frac{1}{2}(|N_u(e)|^{-1}\sum_{e'\in N_u(e)}\sqrt{w_e/w_{e'}} +
|N_v(e)|^{-1}\sum_{e'\in N_v(e)}\sqrt{w_e/w_{e'}}).$
\end{proof}

\xhdr{Computational complexity}
Let $m_r=|\mathcal{E}_r|$ denote the number of edges in snapshot $r$. For each snapshot, computing node activities and constructing outgoing adjacency lists requires $O(m_r)$ time. Ranking edges by their curvature scores requires $O(m_r \log m_r)$ time. The main additional cost comes from computing the neighboring-edge penalty, where each edge is compared with outgoing neighboring edges around its endpoints. The per-snapshot complexity is therefore $O(m_r \log m_r + C_r)$, where $C_r$ is the total cost of scanning local outgoing neighborhoods in snapshot $r$. Since curvature is computed independently across snapshots, the total runtime is the sum of this cost over all snapshots. After the scores are computed, the selection step is lightweight because it applies the policy $\Pi_{\rho}$ to the ranked edge list.
\section{Evaluation}
\label{sec:evaluation}

We evaluate \method through experiments designed to assess predictive preservation, curvature-selection behavior, runtime reduction, and sensitivity to temporal parameters. We first describe the datasets, tasks, baselines, and evaluation protocol. We then analyze curvature-ranked edge bins to determine which curvature ranges preserve the strongest downstream signal. Based on this analysis, we select the default sparsification policy used in the baseline comparison and runtime experiments.

\subsection{Experimental Setup}

\xhdr{Prediction tasks} We evaluate sparsification quality using three graph-level temporal prediction tasks. These tasks are designed to measure whether a sparse graph preserves information about future changes in network activity, influential nodes, and participation breadth.

\paragraph{Task 1: Network Activity Growth Prediction.}
This task predicts whether the number of edges increases in the next temporal snapshot. Let $\mathcal{E}_t$ and $\mathcal{E}_{t+1}$ denote the edge sets in the current and next snapshots, respectively. The binary label is defined as

\begin{equation}
y_t =
\begin{cases}
1, & \text{if } |\mathcal{E}_{t+1}| > |\mathcal{E}_t|,\\
0, & \text{otherwise.}
\end{cases}
\end{equation}

This task evaluates whether the sparse graph preserves signals associated with future expansion in network activity.

\paragraph{Task 2: Network Participation Expansion Prediction.}
This task predicts whether the number of active nodes increases in the next temporal snapshot. Let $\mathcal{V}^{act}_t$ and $\mathcal{V}^{act}_{t+1}$ denote the sets of nodes that participate in at least one edge in the current and next snapshots, respectively. The binary label is defined as

\begin{equation}
y_t =
\begin{cases}
1, & \text{if } |\mathcal{V}^{act}_{t+1}| > |\mathcal{V}^{act}_{t}|,\\
0, & \text{otherwise.}
\end{cases}
\end{equation}

This task evaluates whether the sparsified graph preserves information about future changes in the breadth of network participation.

\paragraph{Task 3: Influential Node Turnover Prediction.}
This task predicts whether the set of influential nodes changes substantially in the next temporal snapshot. For each snapshot, nodes are ranked by degree, and the top-$k$ nodes are treated as influential nodes. Let $\mathcal{T}_t$ and $\mathcal{T}_{t+1}$ denote the top-$k$ node sets in the current and next snapshots, respectively. The turnover ratio is defined as

\begin{equation}
r_t = \frac{|\mathcal{T}_{t+1} \setminus \mathcal{T}_t|}{k}.
\end{equation}

The binary label is

\begin{equation}
y_t =
\begin{cases}
1, & \text{if } r_t > \theta,\\
0, & \text{otherwise.}
\end{cases}
\end{equation}

\medskip
\xhdr{Datasets} We evaluate \method on 12 directed weighted temporal graphs: nine blockchain token-transfer networks from GraphPulse~\cite{shamsi2024graphpulse} and three TGBL temporal graph benchmarks~\cite{huang2023temporal}. Table~\ref{tab:dataset_stats} summarizes the dataset statistics. In the blockchain networks, nodes are wallet addresses and directed temporal edges are token transfers. The TGBL datasets are \textit{TGBL-COIN}, \textit{TGBL-REVIEW}, and \textit{TGBL-COMMENT}, covering cryptocurrency transfers, product reviews, and comment interactions. These datasets test whether \method preserves graph-level predictive signals across temporal graphs with different sizes, densities, durations, and interaction semantics. 

\begin{table}[t]
\centering
\scriptsize
\caption{Statistics of the temporal graph datasets.}
\label{tab:dataset_stats}
\setlength{\tabcolsep}{3pt}
\renewcommand{\arraystretch}{0.95}
\resizebox{\columnwidth}{!}{
\begin{tabular}{lrrrrr}
\toprule
Dataset & Nodes & Edges & Start & End & Days \\
\midrule
ADX & 14,567 & 123,755 & 2020-08-03 & 2023-11-04 & 1,189 \\
BAG & 11,860 & 122,634 & 2023-01-10 & 2023-11-04 & 299 \\
BEPRO & 26,521 & 120,261 & 2020-09-28 & 2023-11-04 & 1,133 \\
DERC & 24,277 & 111,205 & 2021-08-02 & 2023-11-04 & 825 \\
DINO & 15,837 & 94,140 & 2022-11-11 & 2023-11-04 & 359 \\
ETH2x-FLI & 11,008 & 199,088 & 2021-03-14 & 2023-11-04 & 966 \\
EVERMOON & 7,552 & 79,868 & 2023-05-25 & 2023-11-04 & 164 \\
GLM & 53,385 & 234,912 & 2020-11-19 & 2023-11-04 & 1,081 \\
HOICHI & 5,075 & 77,361 & 2022-08-25 & 2023-11-04 & 437 \\
TGBL-REVIEW & 352,637 & 4,873,540 & 1999-06-13 & 2018-10-04 & 7,054 \\
TGBL-COIN & 638,486 & 22,809,486 & 2022-04-01 & 2022-11-01 & 214 \\
TGBL-COMMENT & 994,790 & 44,314,507 & 2005-12-12 & 2010-12-31 & 1,845 \\
\bottomrule
\end{tabular}
}
\end{table}

\xhdr{Baselines}
We compare \method with three recent and prominent graph sparsification baselines: MoG~\cite{zhang2025graph}, TEDDY~\cite{seo2024teddy}, and SEM~\cite{zhang2023ricci}. MoG uses a mixture-of-experts strategy to select pruning criteria and sparsity levels for local graph regions. TEDDY performs one-shot edge sparsification through degree-based structural discrimination. SEM uses Ricci-curvature-based sparsification to preserve informative topological structure for continual graph representation learning. These methods provide strong topology-based comparison points, but their edge-scoring rules do not explicitly use timestamps or temporal interaction context. For a controlled comparison, we apply all baselines under the same snapshot-level protocol and edge-retention budget. \method differs by defining edge importance directly on directed weighted temporal graphs through endpoint support, temporal proximity, and local outgoing interaction competition. Extending the baselines with native timestamp-dependent objectives or temporal neighborhood definitions would require separate temporal sparsification models, which is beyond the scope of this work.

\xhdr{Evaluation setting} The downstream learning pipeline follows the temporal graph property prediction setting introduced by GraphPulse~\cite{shamsi2024graphpulse}, since this setting directly matches our objective of evaluating whether sparse temporal graphs preserve graph-level predictive signals over time. For each sparsified 7-day weekly snapshot, graph-level task features are extracted from the corresponding graph representation and passed to a sequential (i.e., LSTM and GRU-based) prediction framework~\cite{shamsi2024graphpulse}.  To preserve temporal ordering and avoid information leakage, the data were divided using a chronological split, with 70\% used for training, 15\% for validation, and 15\% for testing. The validation split was used to select the retained curvature range and sparsification level, while the test split was reserved for final evaluation. In addition, curvature scores and edge selection were computed independently within each snapshot, so interactions from later snapshots were not used when scoring or selecting edges in an earlier snapshot. All methods are evaluated under the same edge-retention setting, where approximately $\rho = 80\%$ of the original edges are deleted per snapshot. We set $k$ to the top 10\% of active nodes and use $\theta=0.30$ to identify meaningful changes in influential-node composition.
The same feature extraction and prediction pipeline is applied to the full graph, the proposed sparse graph, and all baseline sparsification methods to ensure a fair comparison. Our code is available at \url{https://github.com/FDataLab/TemporalRicci}.

\xhdr{Evaluation metrics}
We evaluate predictive performance using ROC--AUC for each graph-level temporal prediction task. To compare sparse and full graphs across datasets, we also report the full-graph-normalized ROC--AUC, defined as $R_{\mathrm{AUC}}=\mathrm{AUC}(\widetilde{\mathcal{G}})/\mathrm{AUC}(\mathcal{G})$. Values close to $1$ indicate that the sparse graph preserves the predictive performance of the full graph. Sparsification is measured using edge removal and edge retention percentages, defined as $100(1-|\widetilde{\mathcal{E}}|/|\mathcal{E}|)$ and $100|\widetilde{\mathcal{E}}|/|\mathcal{E}|$, respectively. The edge-removal ratio $\rho \in [0,1)$ controls the fraction of deleted edges, so each sparse graph removes approximately $100\rho\%$ of edges and retains approximately $100(1-\rho)\%$. Runtime efficiency is measured as end-to-end reduction relative to the full-graph pipeline, $100(1-T_{\mathrm{sparse}}/T_{\mathrm{full}})$, where $T$ is end-to-end and includes edge selection, feature extraction, and downstream prediction.

\xhdr{Hyperparams} The default temporal decay parameter is set to $\tau=1$ day, which provides the strongest overall performance in the sensitivity analysis. For datasets with different temporal characteristics, $\tau$ can be selected using the validation split. We train the prediction model using the Adam optimizer with a learning rate of \(10^{-4}\) and binary cross-entropy loss. The output layer uses a sigmoid activation for binary prediction. All experiments were run for 100 epochs without learning-rate decay or early stopping.

\xhdr{Computational Environment} Runtime measurements were conducted under a fixed single-machine environment to ensure a controlled comparison across methods. The experiments were executed on an Intel(R) Core(TM) Ultra 9 285H CPU with 16 physical cores, 31.4 GB RAM, and an NVIDIA RTX PRO 1000 GPU with 8 GB memory.

\subsection{Experimental Results}
\label{sec:baseline_comparison}

We report the blockchain transaction network results in Figure~\ref{fig:token_tasks} and the TGBL benchmark results in Figure~\ref{fig:tgbl_tasks}. \method uses the {$\Pi_{\rho}^{\mathrm{high}}$ curvature-selection strategy} {(the rationale behind this strategy, along with the broader analysis of curvature-ranked deletion policies, is examined shortly in Section~\ref{sec:curvature_selection})}.

\input{images/threetaskimages}

\input{images/tgblimages}

Figure~\ref{fig:token_tasks} and Figure~\ref{fig:tgbl_tasks} show that the proposed method consistently preserves strong predictive performance across the evaluated temporal graph prediction tasks under substantial edge sparsification. Across all reported dataset-task combinations, \method achieves the best ROC--AUC result in 27 out of 30 cases. On the TGBL benchmarks, \method achieves the strongest performance in all 6 evaluated cases. The improvements are especially consistent on the blockchain transaction networks, where interaction patterns evolve rapidly and many temporal edges are structurally or temporally redundant.

\begin{table}[t]
\centering
\small
\caption{Average ROC--AUC preservation ($R_{\mathrm{AUC}}$) relative to the full graph on all datasets and tasks.}
\label{tab:auc_preservation}
\setlength{\tabcolsep}{7pt}
\renewcommand{\arraystretch}{1.0}
\begin{tabular}{lcc}
\toprule
Method & Avg. Preservation Ratio ($\uparrow$) & Std. Preservation \\
\midrule
\method (ours) & \textbf{0.977} & \textbf{0.022} \\
TEDDY & 0.894 & 0.063 \\
MoG & 0.886 & 0.077 \\
SEM & 0.884 & 0.080 \\
\bottomrule
\end{tabular}
\end{table}

Task~3 evaluates influential-node turnover prediction and is particularly important for newly developing transaction networks, as it predicts the future prospects of the underlying asset. In more established or stable networks, however, the influential-node set changes much less across snapshots, producing highly imbalanced or nearly degenerate labels. This is why Task~3 results are not reported for the TGBL datasets and for several blockchain datasets where the influential-node composition remains relatively stable throughout the observation period.

We aggregate the figure-level results in Table~\ref{tab:auc_preservation}, which further confirms that the proposed method achieves the strongest predictive preservation relative to the original full graphs. Across all valid dataset-task pairs, \method preserves $97.7\%$ of the full-graph ROC--AUC on average, outperforming TEDDY, MoG, and SEM under the same sparsification budget. These results show that the proposed Temporal Forman--Ricci curvature ranking retains substantially more predictive temporal-structural information after removing approximately $80\%$ of the edges.

\subsubsection{Curvature Selection Policy}
\label{sec:curvature_selection}

We evaluate whether different curvature-ranked edge groups preserve different levels of predictive information. To this end, we analyze the impact of the edge-removal ratio $\rho$ and the edge-selection strategy $\Pi_\rho$.

Figure~\ref{fig:edge_removal_sensitivity} shows the sensitivity of the proposed method to different edge-removal ratios on Task~1, averaged over datasets. We see a similar trend in other tasks.  The ROC--AUC remains relatively stable from 0\% to 80\% removal, decreasing only slightly from 0.789 to 0.768 even when 80\% of the edges are removed. In contrast, performance drops sharply beyond 80\% removal. Based on this trend, we use 80\% edge removal in the main experiments ($\rho=0.8$) because it provides a strong balance between graph compression and predictive preservation. 

\begin{figure}[!t]
    \centering
    \includegraphics[width=0.95\columnwidth]{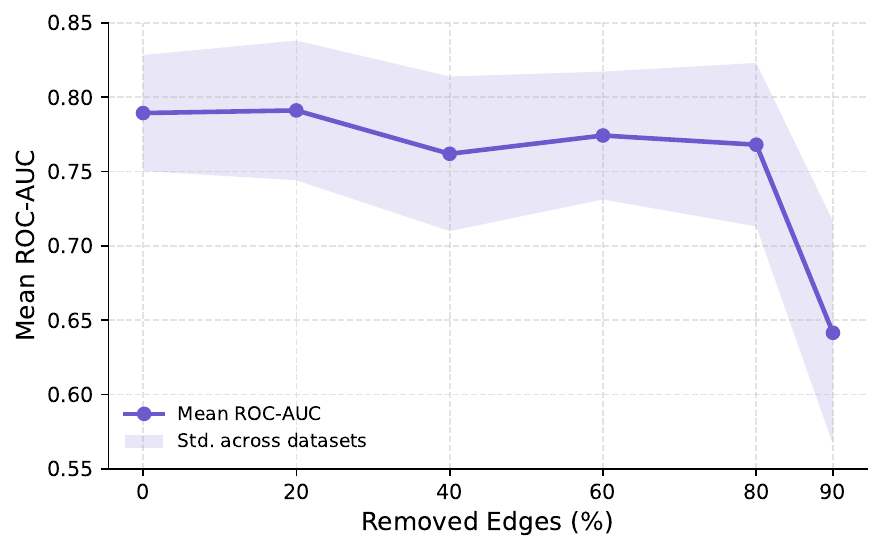}
    \caption{Sensitivity of the proposed framework to the edge-removal ratio on Task~1.}
    \label{fig:edge_removal_sensitivity}
\end{figure}

For each temporal snapshot, we partition edges into five equal-frequency bins according to their \method scores, where Bin~1 contains the lowest-curvature edges, and Bin~5 contains the highest-curvature edges.

Table~\ref{tab:our_method_bin_results} reports the ROC--AUC results across datasets and prediction tasks for each curvature-ranked bin. Although the best-performing curvature range is not identical for every dataset-task pair, the highest-curvature group provides the strongest and most consistent performance overall. In particular, Bin~5 achieves the best ROC--AUC result in the majority of evaluated settings. These results indicate that high-curvature temporal interactions contain a large fraction of the predictive information preserved in the full graph. Specifically, the observed behavior follows a Pareto-style concentration pattern: approximately 20\% of the temporal edges, selected from the highest-curvature range, retain most of the downstream performance obtained from the complete graph. This finding is consistent with prior evidence from graph sparsification and graph lottery-ticket studies, where a relatively small subset of edges was found to preserve most of the predictive signal of the full graph~\cite{spielman2008graph,chen2021unified,seo2024teddy,zhang2025graph}. In our setting, this concentration emerges from the curvature ranking itself, where high-curvature edges are less redundant in their local temporal neighborhoods and therefore carry a larger share of task-relevant information. Based on these validation results, we use $\Pi_{\rho}^{\mathrm{high}}$ as the default retention policy under the $80\%$ edge-deletion setting.

\input{tables/binauc}

\begin{table*}[tb]
\centering
\scriptsize
\caption{Runtime saving percentage across datasets. For each dataset and method, \textit{Save\%} is computed as $100(1-T_{\mathrm{sparse}}/T_{\mathrm{full}})$, where $T_{\mathrm{sparse}}=T_{\mathrm{spars.}}+T_{\mathrm{proc.}}+T_{\mathrm{pred.}}$ includes sparsification, feature processing, and prediction runtime, and $T_{\mathrm{full}}$ is the runtime of the original full-graph pipeline. The Avg. column reports the arithmetic mean across datasets.}
\label{tab:runtime_saving_pivot}
\setlength{\tabcolsep}{3.2pt}
\renewcommand{\arraystretch}{1.0}
\resizebox{\textwidth}{!}{%
\begin{tabular}{lrrrrrrrrrrrrc}
\toprule
Method & ADX & BAG & BEPRO & DERC & DINO & ETH2x & EVER & GLM & HOICHI & TGBL-CMT & TGBL-COIN & TGBL-REV & Avg. Saved (\%) \\
\midrule
\method & \textbf{47.12} & 48.60 & 50.73 & \textbf{57.22} & 51.70 & 56.91 & 49.06 & 56.28 & 48.31 & 68.74 & 59.33 & \textbf{77.04} & 55.94 \\
MoG & 33.25 & 45.63 & 47.62 & 45.50 & 47.24 & 52.10 & 51.50 & 54.20 & 40.12 & 67.40 & 41.70 & 75.78 & 50.17 \\
SEM & 2.94 & 49.23 & 47.88 & 59.14 & \textbf{58.71} & \textbf{60.56} & 52.16 & 56.77 & \textbf{56.90} & 66.82 & 42.39 & 72.58 & 52.17 \\
TEDDY & 38.92 & \textbf{56.88} & \textbf{52.33} & 52.71 & 57.96 & 58.84 & \textbf{52.34} & \textbf{61.77} & 48.90 & \textbf{69.89} & \textbf{63.42} & 76.91 & \textbf{57.24} \\
\bottomrule
\end{tabular}%
}
\end{table*}

\subsubsection{Sensitivity Analysis}
\label{sec:sensitivity}

To evaluate the robustness of the proposed framework with respect to temporal influence modeling, we conduct a sensitivity analysis on the temporal decay parameter $\tau$, which controls how rapidly the influence between temporally neighboring edges decays during curvature computation. Smaller $\tau$ values restrict the interaction range to nearly simultaneous transactions, while larger $\tau$ values allow temporally distant interactions to contribute more strongly to the curvature calculation.

\begin{figure}[!t]
    \centering
    \includegraphics[width=0.95\columnwidth]{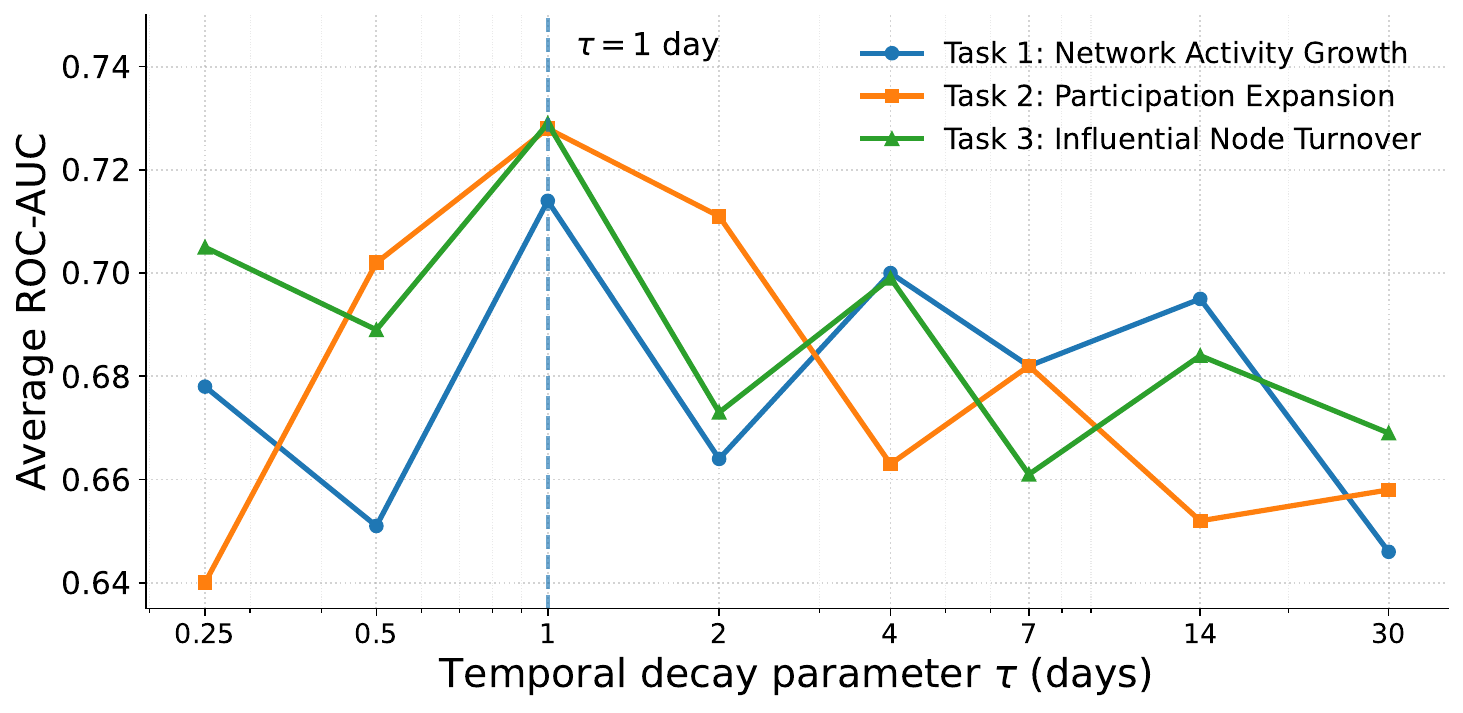}
    \caption{Sensitivity of the proposed framework to the temporal decay parameter $\tau$.}
    \label{fig:tau_sensitivity}
\end{figure}

We analyze $\tau \in \{0.25, 0.5, 1, 2, 4, 7, 14, 30\}$ in Figure~\ref{fig:tau_sensitivity}. The results indicate that \method achieves its strongest predictive performance around $\tau=1$ across most tasks. When $\tau$ becomes excessively small, the curvature computation becomes highly localized and sensitive to short-term fluctuations, leading to reduced predictive stability. In contrast, large $\tau$ values weaken the temporal discrimination capability of \method by allowing a broad range of temporally distant interactions to contribute similarly during local neighboring-edge penalty modeling. The performance degradation remains relatively moderate over a broad range of $\tau$ values, indicating that the proposed framework is reasonably robust to the choice of temporal decay scale.

\xhdr{Runtime analysis} We evaluate end-to-end runtime saving relative to the full-graph pipeline, including sparsification, feature processing, and prediction. Table~\ref{tab:runtime_saving_pivot} reports the saving percentage for each method and dataset, computed as $100(1-T_{\mathrm{sparse}}/T_{\mathrm{full}})$. \method reduces runtime on every dataset and achieves a $55.94\%$ average saving across all datasets. MoG, SEM, and TEDDY achieve average savings of $50.17\%$, $52.17\%$, and $57.24\%$, respectively. Although TEDDY has a slightly higher average runtime saving, \method provides nearly the same efficiency gain while preserving substantially stronger predictive performance, as shown in the AUC results of Table~\ref{tab:auc_preservation}.

\section{Conclusion}
\label{sec:conclusion}

We have introduced \method, a graph-curvature-based edge sparsification framework for dynamic graph learning. The method extends Forman--Ricci curvature to directed weighted temporal graphs by combining endpoint support, temporal proximity, and local outgoing interaction competition into a single edge-level score. By computing curvature within temporal snapshots, \method ranks edges according to their local temporal-structural importance and constructs sparse graph representations that preserve coverage across the full observation period.

The experiments show that high-curvature edge retention is effective across blockchain transaction networks and TGBL benchmark datasets. \method removes approximately $80\%$ of the edges while preserving $97.7\%$ of the full-graph ROC--AUC on average. It achieves the best ROC--AUC in 27 out of 30 reported dataset-task settings and in all 6 TGBL settings, outperforming SOTA competitors under the same sparsification budget. \method reduces end-to-end runtime by $55.94\%$ on average. 

These results indicate that temporal curvature is an effective criterion for selecting informative interactions in large dynamic graphs. Future work can study adaptive temporal windows, task-specific curvature thresholds, and extensions to continuous-time temporal graph learning.

\section*{GenAI Usage Disclosure}
Generative AI tools were used to assist with language editing and improving the clarity of the manuscript. No GenAI tools were used to process or analyze data, conduct experiments, or produce results.  All suggested edits were reviewed and verified by the authors.

\printbibliography

\end{document}

%% file: images/threetaskimages.tex
\begin{figure}[t]
    \centering
    
    \begin{subfigure}{0.88\columnwidth}
        \centering
        \includegraphics[width=\linewidth]{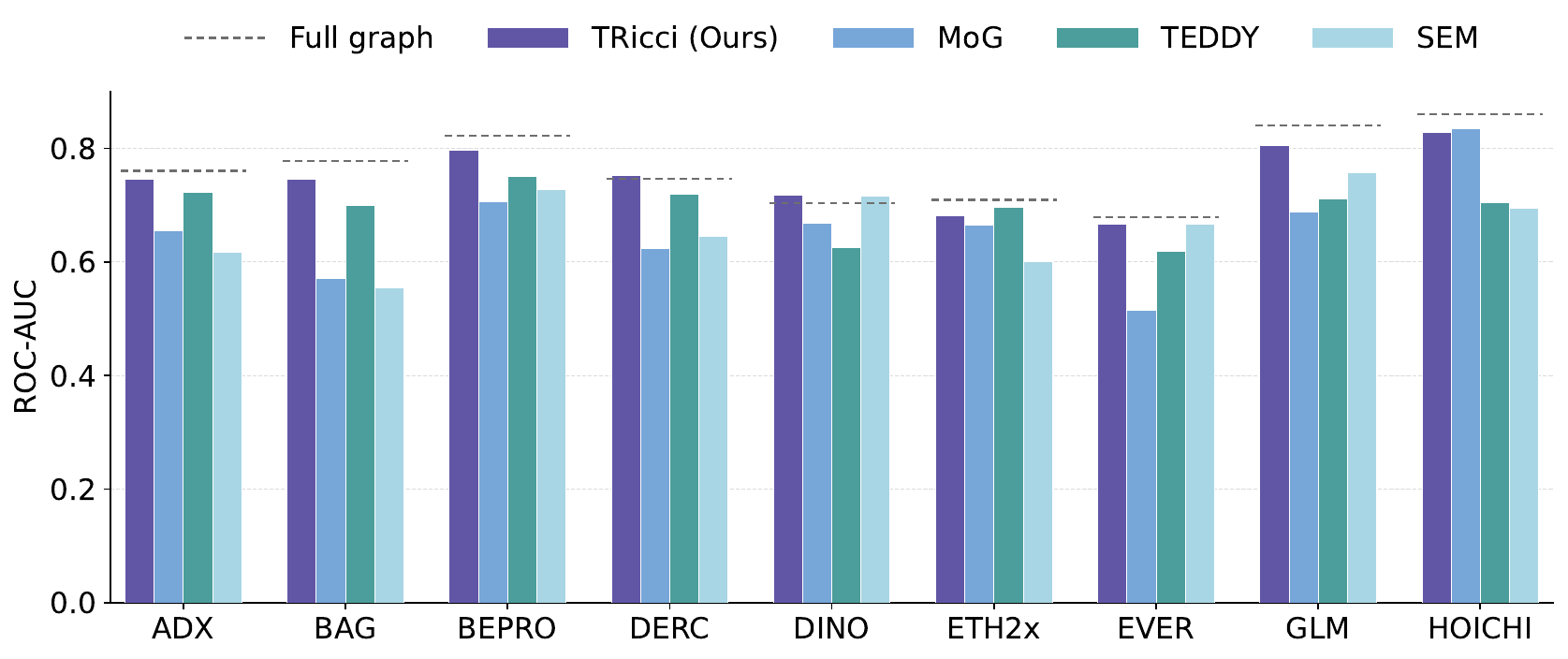}
        \caption{Task~1: network activity growth prediction}
    \end{subfigure}

    \begin{subfigure}{0.88\columnwidth}
        \centering
        \includegraphics[width=\linewidth]{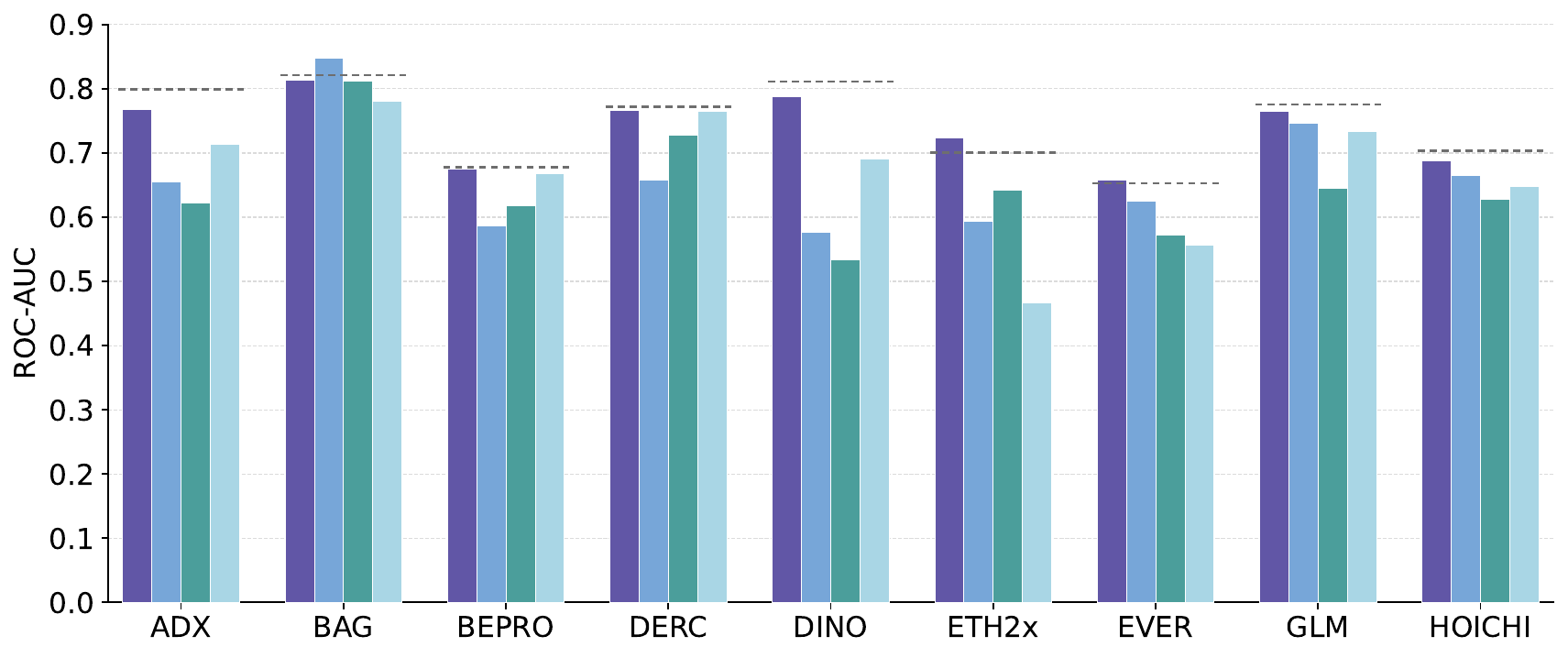}
        \caption{Task~2: network participation expansion prediction}
    \end{subfigure}

    \begin{subfigure}{0.88\columnwidth}
        \centering
        \includegraphics[width=\linewidth]{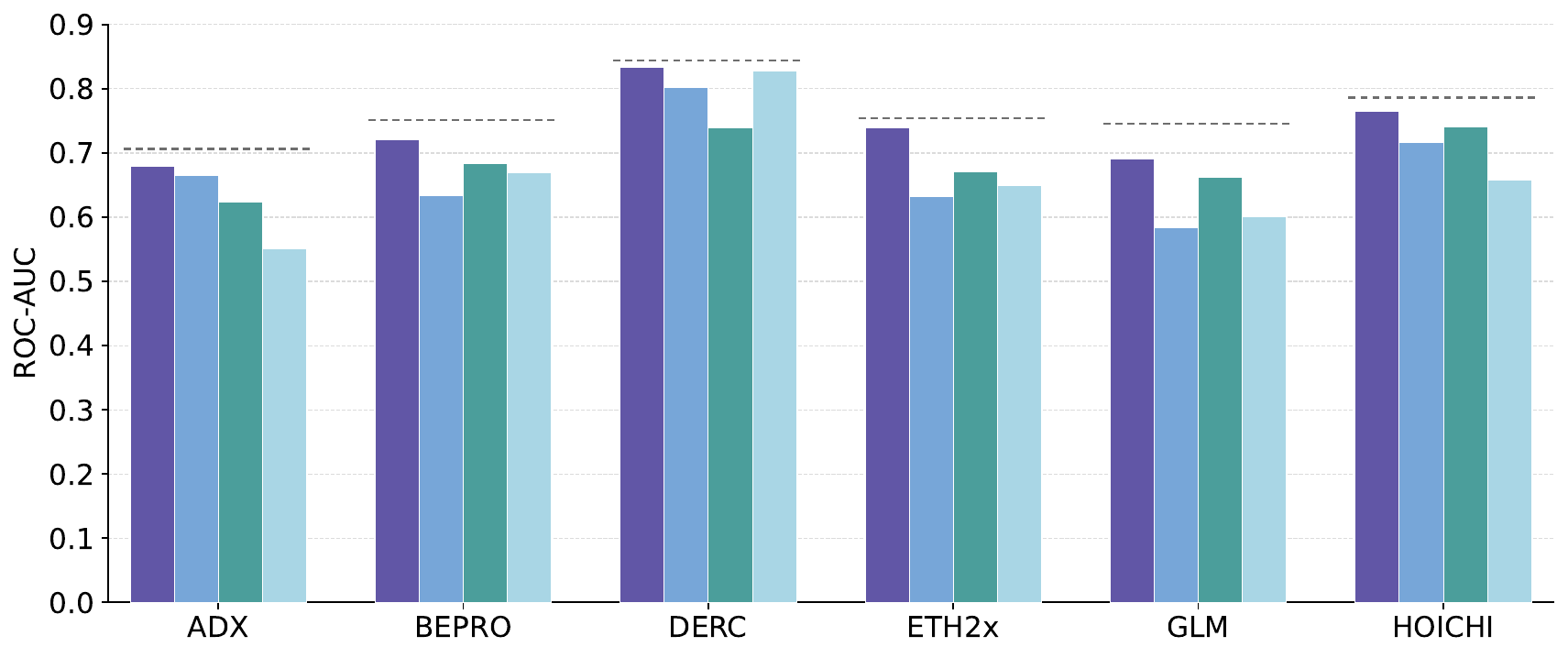}
        \caption{Task~3: influential node turnover prediction}
    \end{subfigure}

    \caption{ROC--AUC comparison on token transaction networks across the three evaluated graph-level temporal prediction tasks. Results for Task~3 are reported only for datasets with non-degenerate labels.}
    
    \label{fig:token_tasks}
\end{figure}

%% file: images/tgblimages.tex
\begin{figure}[!t]
    \centering
    
    \begin{subfigure}{0.8\columnwidth}
        \centering
        \includegraphics[width=\linewidth]{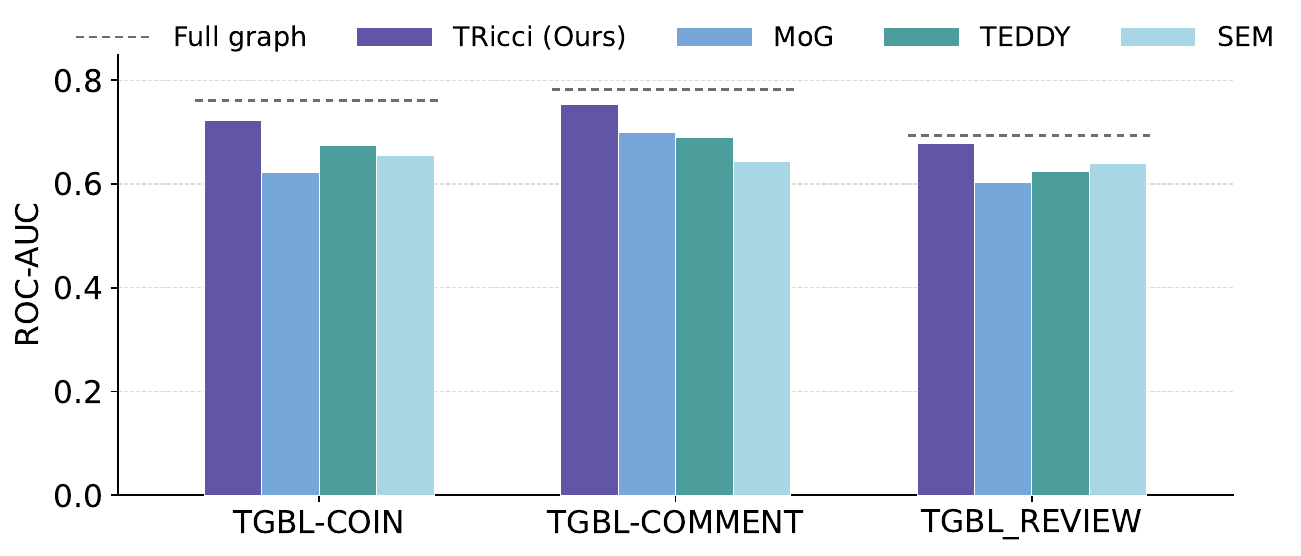}
        \caption{Task~1: network activity growth prediction}
    \end{subfigure}
    
    \begin{subfigure}{0.8\columnwidth}
        \centering
        \includegraphics[width=\linewidth]{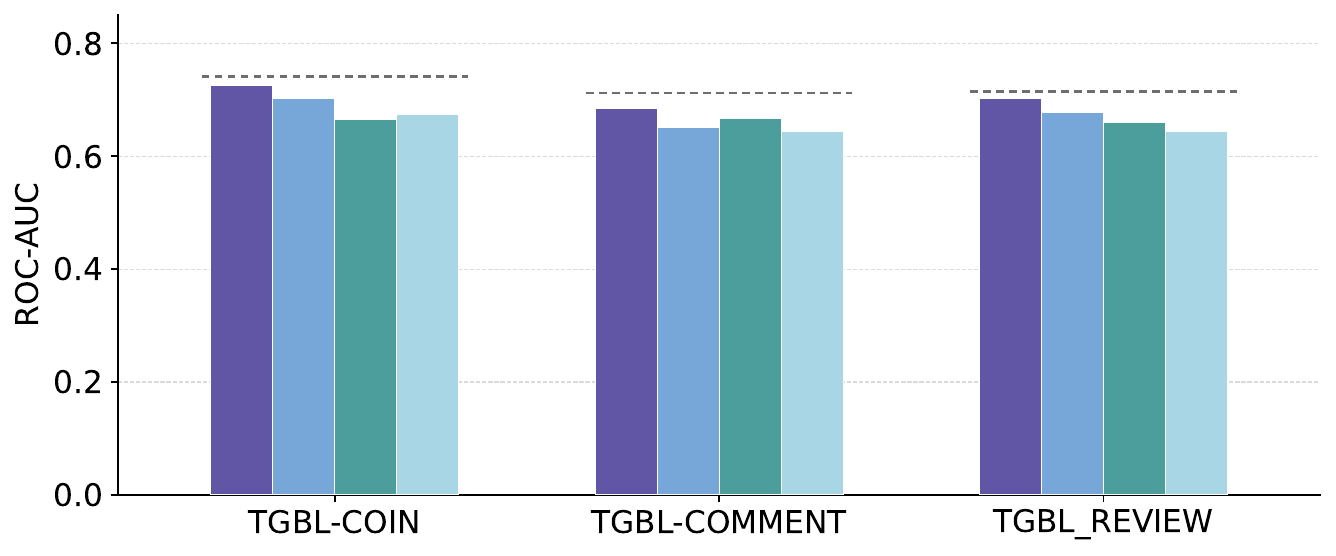}
        \caption{Task~2: network participation expansion prediction}
    \end{subfigure}
    
    \caption{ROC--AUC comparison on TGBL benchmark datasets across two graph-level temporal prediction tasks.}
    
    \label{fig:tgbl_tasks}
\end{figure}

%% file: tables/binauc.tex
\begin{table}[t]
\centering
\scriptsize
\caption{ROC--AUC results of the proposed method across curvature-ranked bins. The Full column reports the original graph performance, while Bin1--Bin5 represent increasingly higher-curvature edge groups. The best-performing bin for each dataset/task is highlighted.}
\label{tab:our_method_bin_results}
\renewcommand{\arraystretch}{1.02}
\setlength{\tabcolsep}{2.4pt}
\resizebox{0.4\textwidth}{!}{%
\begin{tabular}{lcccccc}
\toprule
\textbf{Dataset / Task} & \textbf{Full} & \textbf{Bin1} & \textbf{Bin2} & \textbf{Bin3} & \textbf{Bin4} & \textbf{Bin5} \\
\midrule
ADX / task1 & \textcolor{gray}{0.76} & 0.56 & 0.66 & 0.70 & 0.73 & \best{0.75} \\
ADX / task2 & \textcolor{gray}{0.80} & 0.57 & 0.50 & 0.65 & 0.75 & \best{0.77} \\
ADX / task3 & \textcolor{gray}{0.71} & 0.44 & 0.56 & 0.49 & 0.63 & \best{0.68} \\
\midrule
BAG / task1 & \textcolor{gray}{0.78} & 0.44 & 0.34 & 0.63 & \best{0.75} & 0.73 \\
BAG / task2 & \textcolor{gray}{0.82} & 0.58 & 0.37 & 0.76 & \best{0.81} & 0.80 \\
\midrule
BEPRO / task1 & \textcolor{gray}{0.82} & 0.67 & 0.69 & 0.72 & 0.74 & \best{0.80} \\
BEPRO / task2 & \textcolor{gray}{0.68} & 0.55 & 0.59 & 0.65 & \best{0.67} & 0.65 \\
BEPRO / task3 & \textcolor{gray}{0.75} & 0.52 & 0.56 & 0.64 & 0.68 & \best{0.72} \\
\midrule
DERC / task1 & \textcolor{gray}{0.75} & 0.56 & 0.58 & 0.60 & 0.68 & \best{0.75} \\
DERC / task2 & \textcolor{gray}{0.77} & 0.58 & 0.55 & 0.66 & \best{0.77} & 0.77 \\
DERC / task3 & \textcolor{gray}{0.84} & 0.60 & 0.59 & 0.77 & 0.69 & \best{0.83} \\
\midrule
DINO / task1 & \textcolor{gray}{0.70} & 0.44 & 0.56 & 0.50 & \best{0.72} & 0.69 \\
DINO / task2 & \textcolor{gray}{0.81} & 0.50 & 0.36 & 0.69 & 0.67 & \best{0.79} \\
\midrule
ETH2X-FLI / task1 & \textcolor{gray}{0.71} & 0.37 & 0.49 & 0.39 & 0.62 & \best{0.68} \\
ETH2X-FLI / task2 & \textcolor{gray}{0.70} & 0.51 & 0.54 & 0.39 & 0.65 & \best{0.72} \\
ETH2X-FLI / task3 & \textcolor{gray}{0.75} & 0.67 & 0.62 & 0.69 & 0.62 & \best{0.74} \\
\midrule
EVERMOON / task1 & \textcolor{gray}{0.68} & 0.36 & 0.55 & 0.26 & 0.64 & \best{0.67} \\
EVERMOON / task2 & \textcolor{gray}{0.65} & 0.35 & \best{0.66} & 0.44 & 0.51 & 0.62 \\
\midrule
GLM / task1 & \textcolor{gray}{0.84} & 0.60 & 0.65 & 0.71 & 0.77 & \best{0.80} \\
GLM / task2 & \textcolor{gray}{0.78} & 0.56 & 0.69 & 0.70 & 0.70 & \best{0.76} \\
GLM / task3 & \textcolor{gray}{0.75} & 0.67 & 0.63 & 0.54 & 0.66 & \best{0.69} \\
\midrule
HOICHI / task1 & \textcolor{gray}{0.86} & 0.65 & 0.61 & 0.62 & 0.71 & \best{0.83} \\
HOICHI / task2 & \textcolor{gray}{0.70} & 0.56 & 0.45 & 0.63 & \best{0.69} & 0.68 \\
HOICHI / task3 & \textcolor{gray}{0.79} & 0.42 & 0.67 & 0.44 & \best{0.76} & 0.76 \\
\midrule
TGBL-CMT / task1 & \textcolor{gray}{0.78} & 0.50 & 0.60 & 0.49 & 0.69 & \best{0.75} \\
TGBL-CMT / task2 & \textcolor{gray}{0.71} & 0.49 & 0.47 & 0.49 & 0.51 & \best{0.69} \\
TGBL-COIN / task1 & \textcolor{gray}{0.76} & 0.37 & 0.68 & 0.68 & 0.71 & \best{0.72} \\
TGBL-COIN / task2 & \textcolor{gray}{0.74} & 0.40 & 0.62 & 0.58 & 0.71 & \best{0.73} \\
TGBL-REV / task1 & \textcolor{gray}{0.69} & 0.49 & 0.52 & 0.48 & 0.62 & \best{0.68} \\
TGBL-REV / task2 & \textcolor{gray}{0.71} & 0.51 & 0.63 & 0.53 & \best{0.70} & 0.69 \\

\bottomrule 
\end{tabular}%
}
\end{table}